\documentclass{article}

\usepackage{arxiv}
\usepackage{natbib}

% For algorithms
% \usepackage{algorithm}
\usepackage{algorithmic}

\usepackage{hyperref}
\usepackage{url}

\usepackage[utf8]{inputenc} % allow utf-8 input
\usepackage[T1]{fontenc}    % use 8-bit T1 fonts
\usepackage{hyperref}       % hyperlinks
\usepackage{url}            % simple URL typesetting
\usepackage{booktabs}       % professional-quality tables
\usepackage{amsfonts}       % blackboard math symbols
\usepackage{nicefrac}       % compact symbols for 1/2, etc.
\usepackage{microtype}      % microtypography
\usepackage{xcolor}         % colors
\usepackage[ruled,vlined]{algorithm2e}
\SetKw{KwBy}{by}
\usepackage{lipsum}		% Can be removed after putting your text content
\usepackage{graphicx}
\usepackage{natbib}
\usepackage{doi}
\usepackage{amsthm}
\newtheorem{example}{Example}

% \usepackage[utf8]{inputenc} % allow utf-8 input
% \usepackage[T1]{fontenc}    % use 8-bit T1 fonts
% \usepackage{hyperref}       % hyperlinks
% \usepackage{url}            % simple URL typesetting
% \usepackage{booktabs}       % professional-quality tables
% \usepackage{amsfonts}       % blackboard math symbols
% \usepackage{nicefrac}       % compact symbols for 1/2, etc.
% \usepackage{microtype}      % microtypography
% \usepackage{lipsum}
% \usepackage{graphicx}
% % \graphicspath{ {./images/} }

% \usepackage{subfig}
% \usepackage{wrapfig}
% \usepackage[flushleft]{threeparttable}

% % For Algorithms
% \usepackage{algorithm}
% \usepackage{algpseudocode}
% \algdef{SE}[DOWHILE]{Do}{doWhile}{\algorithmicdo}[1]{\algorithmicwhile\ #1}%

% % \usepackage[symbol]{footmisc}
% % \usepackage{amsthm}

% \usepackage{hyperref}
% \usepackage[symbol]{footmisc}
% \usepackage{amsthm}
% \usepackage{amsmath}   
% {
%       \theoremstyle{plain}
%       \newtheorem{ass}{Assumption}
%       \newtheorem{thm}{Theorem}
%       \newtheorem{defn}{Definition}
%       \newtheorem{lem}{Lemma}
%       \newtheorem{cor}{Corollary}
% }
% \newcommand{\eqdef}{\stackrel{\text{def}}{=}}
% \newcommand{\norm}[1]{\left\|#1\right\|}

% \newcommand{\inner}[1]{\left<#1\right>}
% % \newcommand{\norm}[1]{\left\|#1\right\|}
% \newcommand{\E}{\mathbb{E}}
% \newcommand{\abs}[1]{\lvert#1\rvert}

% %--- https://tex.stackexchange.com/questions/30720/footnote-without-a-marker%
% \makeatletter
% \def\blfootnote{\gdef\@thefnmark{}\@footnotetext}
% \makeatother

\title{Ensembling Graph Predictions for AMR Parsing}

\author{Hoang Thanh Lam$^{1}$, Gabriele Picco$^{1}$, Yufang Hou$^{1}$, Young-Suk Lee$^{2}$, \\ \textbf{Lam M. Nguyen}$^{2}$, \textbf{Dzung T. Phan}$^{2}$, \textbf{Vanessa López}$^{1}$, \textbf{Ramon Fernandez Astudillo}$^{2}$ \\
$^{1}$ IBM Research, Dublin, Ireland \\
$^{2}$ IBM Research, Thomas J. Watson Research Center, Yorktown Heights, USA\\
\texttt{t.l.hoang@ie.ibm.com}, \texttt{gabriele.picco@ibm.com}, \texttt{yhou@ie.ibm.com}, \\ 
\texttt{ysuklee@us.ibm.com}, \texttt{LamNguyen.MLTD@ibm.com}, \texttt{phandu@us.ibm.com},   \\
\texttt{vanlopez@ie.ibm.com}, \texttt{ramon.astudillo@ibm.com}
}

\usepackage{pifont}

% More packages
\usepackage{epsfig}
\usepackage{amssymb}
\usepackage{amsmath}
\usepackage{amsthm}
\usepackage{amsfonts}
\usepackage{bbding}
\usepackage{array}
\usepackage{paralist}
\usepackage{xargs}                      % Use more than one optional parameter in a new commands
\usepackage{caption}

% \newcommand{\E}{\mathbf{E}}

% \usepackage[colorinlistoftodos,prependcaption,textsize=tiny]{todonotes}
% \newcommandx{\unsure}[2][1=]{\todo[inline,linecolor=red,backgroundcolor=red!25,bordercolor=red,#1]{#2}}
% \newcommandx{\change}[2][1=]{\todo[linecolor=blue,backgroundcolor=blue!25,bordercolor=blue,#1]{#2}}
% \newcommandx{\info}[2][1=]{\todo[linecolor=OliveGreen,inline,backgroundcolor=OliveGreen!25,bordercolor=OliveGreen,#1]{#2}}
% \newcommandx{\improvement}[2][1=]{\todo[linecolor=Plum,inline,backgroundcolor=Plum!25,bordercolor=Plum,#1]{#2}}

%% Shrink the spaces
%\setlength{\textfloatsep}{10pt minus 1.5pt}
%\setlength{\floatsep}{10pt minus 1.5pt}
%\setlength{\intextsep}{10pt minus 1.5pt}
%\setlength{\dbltextfloatsep}{10pt minus 1.5pt}
%\setlength{\dblfloatsep}{10pt minus 1.5pt}
%\usepackage[font=small,skip=5pt]{caption}
%\newcommand\maybeflushthispage{\vfil\penalty1000\vfilneg}

% Macros

\newtheorem{theorem}{Theorem}

\newtheorem{problem}{Problem}

%\textcolor{blue}
%\newcommand{\new}[1]{{#1}}

% \newcommand{\new}[1]{\textcolor{red}{#1}}

% \newcommand{\ve}[2]{\langle #1, #2 \rangle}

% Math operators and mid columns
\newcolumntype{C}[1]{>{\centering\let\newline\\\arraybackslash\hspace{0pt}}m{#1}}

% \DeclareMathOperator{\R}{\mathbb{R}} 

           % expectation

\usepackage{xcolor}

 % differential sign

\newcommand{\zero}[1]{{\boldsymbol{0}}}

%% Create subtheorems
%\makeatletter
%\newcounter{subthm} 
%\let\savedc@thm\c@hyp
%\newenvironment{subthm}
% {%
%  \setcounter{subthm}{0}%
%  \stepcounter{thm}%
%  \edef\saved@thm{\thethm}% Save the current value of hyp
%  \let\c@thm\c@subthm     % Now hyp is subhyp
%  \renewcommand{\thethm}{\saved@thm\alph{thm}}%
% }
% {}
%\newcommand{\normhyp}{%
%  \let\c@hyp\savedc@hyp % revert to the old one
%  \renewcommand\thehyp{\arabic{hyp}}%
%} 
%\makeatother
%
%% Create subassumptions
%\makeatletter
%\newcounter{subass} 
%\let\savedc@ass\c@hyp
%\newenvironment{subass}
% {%
%  \setcounter{subass}{0}%
%  \stepcounter{ass}%
%  \edef\saved@ass{\theass}% Save the current value of hyp
%  \let\c@ass\c@subass     % Now hyp is subhyp
%  \renewcommand{\theass}{\saved@ass\alph{ass}}%
% }
% {}
%%\newcommand{\normhyp}{%
%%  \let\c@hyp\savedc@hyp % revert to the old one
%%  \renewcommand\thehyp{\arabic{hyp}}%
%%} 
%\makeatother

%% Gap between section/subsection.
%\newcommand{\beforesubsec}{\vspace{-1.5ex}}
%\newcommand{\aftersubsec}{\vspace{-1ex}}
%\newcommand{\beforesec}{\vspace{-2ex}}
%\newcommand{\aftersec}{\vspace{-2ex}}
%\newcommand{\beforesubsubsec}{\vspace{-1.5ex}}
%\newcommand{\aftersubsubsec}{\vspace{-1.5ex}}
%\newcommand{\beforepara}{\vspace{-2.5ex}}

% Gap between section/subsection.

\begin{document}

\maketitle

\begin{abstract}
In many machine learning tasks, models are trained to predict structure data such as graphs. For example, in natural language processing, it is very common to parse texts into dependency trees or abstract meaning representation (AMR) graphs. On the other hand, ensemble methods combine predictions from multiple models to create a new one that is more robust and accurate than individual predictions. In the literature, there are many ensembling techniques proposed for classification or regression problems, however, ensemble graph prediction has not been studied thoroughly. In this work, we formalize this problem as mining the largest graph that is the most supported by a collection of graph predictions. As the problem is NP-Hard, we propose an efficient heuristic algorithm to approximate the optimal solution. To validate our approach, we carried out experiments in AMR parsing problems. The experimental results demonstrate that the proposed approach can combine the strength of state-of-the-art AMR parsers to create new predictions that are more accurate than any individual models in five standard benchmark datasets\footnote{This ArXiv paper is an updated version of the published paper at NeurIPS 2021 \url{https://openreview.net/forum?id=lmm2W2ICtjk}. The paper discusses related work by \cite{barzdins} in Section \ref{sec:related work} and provides detailed discussion and comparative studies in Subsection \ref{subsec:comparison}.  }. 
\end{abstract}

\section{Introduction}\label{sec_intro}

Ensemble learning is a popular machine learning practice, in which predictions from multiple models are blended to create a new one that is usually more robust and accurate. Indeed, ensemble methods like XGBOOST are the winning solution in many machine learning and data science competitions \citep{xgboost}. A key reason behind the successes of the ensemble methods is that they can combine the strength of different models to reduce the variance and bias in the final prediction \citep{domingos2000unified,valentini2004bias}.  Research in ensemble methods mostly focuses on regression or classification problems \citep{dong2020survey}. Recently, in many machine learning tasks prediction outputs are provided in a form of graphs. For instance, in Abstract Meaning Representation (AMR) parsing  %problem 
\citep{banarescu2013abstract}, the input is a fragment of text and the output is a  rooted, labeled, directed, acyclic graph (DAG). It abstracts away from syntactic representations, in the sense that 
%similar meaning sentences 
sentences with similar meaning
should have the same AMR. Figure \ref{fig:amr} shows an AMR graph for the sentence \emph{You told me to wash the dog} where nodes are concepts and edges are relations.

AMR parsing is an important problem in natural language processing (NLP) research %having 
and it has
a broad application in downstream tasks such as question answering \citep{kapanipathi2020question} and common sense reasoning \citep{lim-etal-2020-know}. Recent approaches for AMR parsing leverage the advances from pretrained language models \citep{spring} and numerous deep neural network architectures \citep{cai-lam,zhou2021apt}. 

 %Different from
 Unlike methods for ensembling numerical or categorical values for regression or classification problems where the mean value or majority votes are used respectively, the problem of graph ensemble is more complicated. For instance, Figure \ref{fig:running_example} show three graphs $g_1, g_2, g_3$ with different structures, having varied number of edges and vertices with different labels. In this work, we formulate the ensemble graph prediction as a graph mining problem where we look for the largest common structure among the graph predictions. In general, finding the largest common subgraph is a well-known computationally intractable problem in graph theory. However, for AMR parsing problems where the AMR graphs have labels and a simple tree-alike structure, we %can
 propose an efficient heuristic algorithm (\emph{Graphene}) to approximate the solution of the given problem well.
 
\begin{figure*}[htb]
\center{\includegraphics[width=\textwidth]{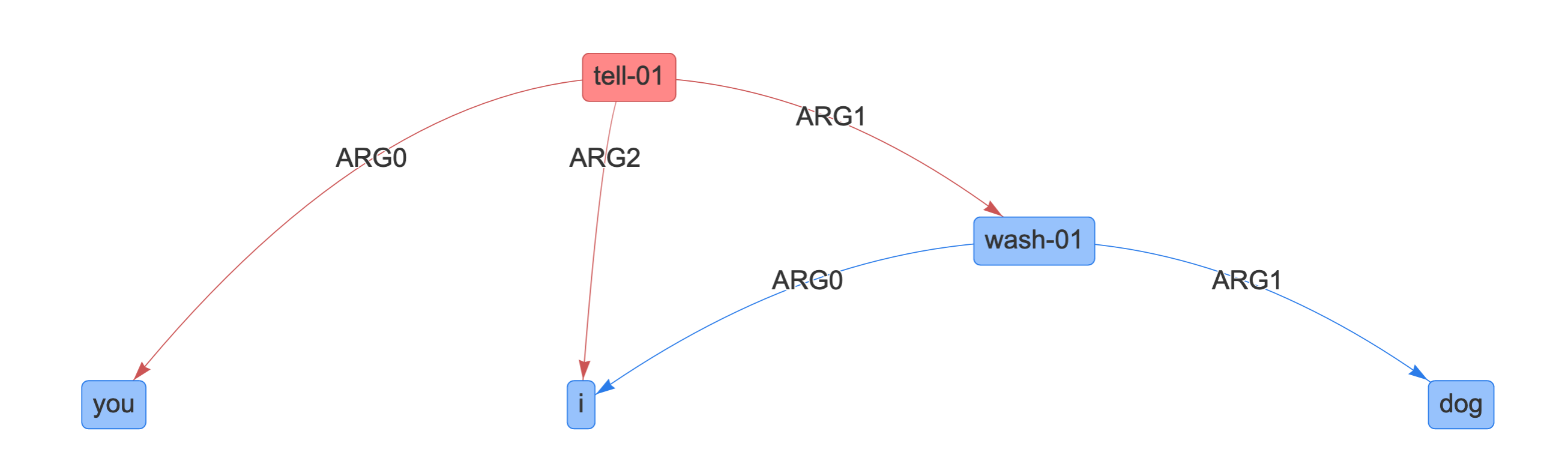}}
\caption{\label{fig:amr} An example AMR graph for the sentence \emph{You told me to wash the dog}.}
\end{figure*}
 
To validate our approach, we collect the predictions from four state-of-the-art AMR parsers and create new predictions using the proposed graph ensemble algorithm.  The chosen AMR parsers are the recent state-of-the-art AMR parsers including a seq2seq-based method using  BART \citep{spring}, a transition-based approach proposed in \citep{zhou2021apt} and a graph-based approach proposed in \citep{cai-lam}. In addition to those models, we also trained a new seq2seq model based on T5 \citep{t5} to leverage the strength of this pretrained language model.% T5.  

The experimental results show that in five standard benchmark datasets, our proposed 
ensemble approach outperforms the previous state-of-the-art models and achieves new state-of-the-art results in all datasets. For example, our approach achieves new state-of-the-art results with 1.7, 1.5, and 1.3 points better than prior arts in the BIO (under out-of-distribution evaluation),  AMR 2.0, and AMR 3.0 datasets respectively. This result demonstrates the strength of 
%the ensemble methods 
our ensemble method 
in leveraging the model diversity to achieve better performance. An interesting property of our solution is that it is model-agnostic, therefore it can be used to make an ensemble of existing model predictions without the requirement to have an access to model training. Source code is open-sourced\footnote{\url{https://github.com/IBM/graph_ensemble_learning}}.

Our paper is organized as follows: Section \ref{sec:problem definition} discusses a formal problem definition and a study on the computational intractability of the formulated problem. The graph ensemble algorithm is described in Section \ref{sec:algorithm}. Experimental results are reported in Section \ref{sec:experiments} while Section \ref{sec:related work} discusses related works. The conclusion and future work are discussed in Section \ref{sec:conclusions and future work}.
\section{Problem formulation}
\label{sec:problem definition}
Denote $g = (E, V)$ as a graph with the set of edges $E$ and the set of vertices $V$. Each vertex $v \in V$ and edge $e \in E$ is associated with a label denoted as $l(v)$ and $l(e)$ respectively, where $l(.)$ is a labelling function. Given two graphs $g_1 = (E_1, V_1)$ and $g_2 = (E_2, V_2)$, a \emph{vertex matching} $\phi$ is a bijective function that maps a vertex $v \in V_1$ to a vertex $\phi(v) \in V_2$. 

\begin{example}
In Figure \ref{fig:running_example}, between $g_1$ and $g_2$ there are many possible vertex matches, where $\phi(g_1, g_2) = [1 \rightarrow 3, 2 \rightarrow 2, 3 \rightarrow 1]$ is one of them (which can be read as the first vertex of $g_1$ is mapped to the third vertex of $g_2$ and so forth). Notice that not all vertices $v \in V_1$ has a match in $V_2$ and vice versa. Indeed, in this example, the fourth vertex in $g_2$ does not have a matched vertex in $g_1$.
\end{example}

\begin{figure*}[htb]
\center{\includegraphics[width=\textwidth]{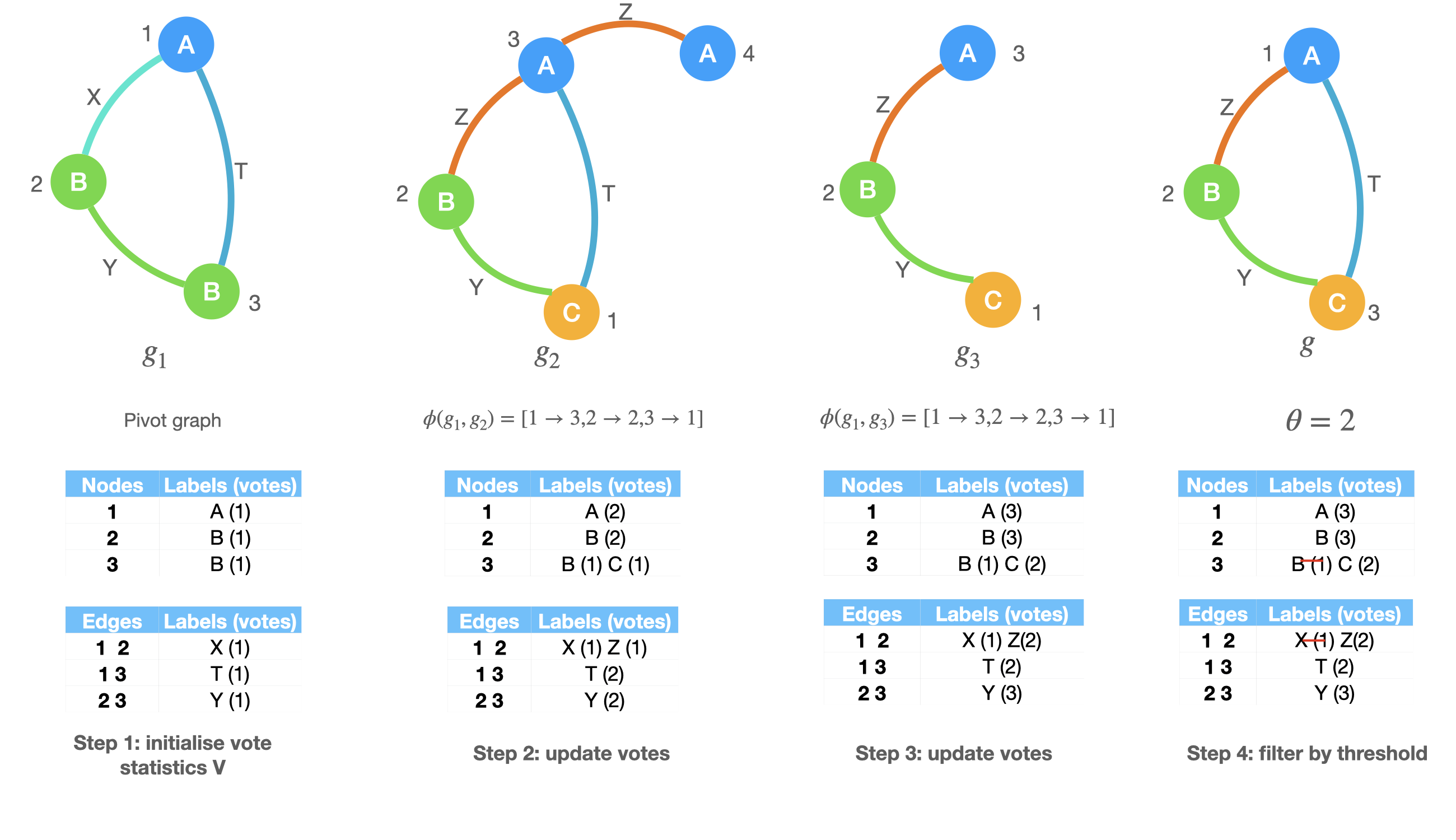}}
\caption{\label{fig:running_example} A graph ensemble example.  Each node and edge of $g$ occurs in at least two out of three graphs $g_1, g_2, g_3$. Therefore, $g$ is  $\theta$-supported  where $\theta=2$ by the given set of graphs. Graph $g$ is also the graph with the largest sum of supports among all $\theta$-supported graphs. The tables show the node and edge support (votes) are updated in each step of the Graphene algorithm when $g_1$ is a pivot graph.}
\end{figure*}

Given two graphs $g_1$, $g_2$ and a vertex match $\phi(g_1, g_2)$, support of a vertex $v$ with respect to the matching $\phi$, denoted as $s_{\phi}(v)$, is equal to 1 if $l(v) = l(\phi(v))$ and 0 otherwise. Given  an edge $e = (v_1, v_2)$ the support of $e$ with respect to the vertex match $\phi$, denoted as $s_{\phi}(e)$, is equal to 1 if $l(e) = l((\phi(v_1), \phi(v_2)))$ and 0 otherwise.

\begin{example}
In Figure \ref{fig:running_example}, for the  vertex match $\phi(g_1, g_2) = [1 \rightarrow 3, 2 \rightarrow 2, 3 \rightarrow 1]$, the first vertex in $g_1$ and the third vertex in $g_2$ shares the same label $A$, therefore the support of the given vertex is equal to 1. On the other hand, the third vertex in $g_1$ and the first vertex in $g_2$ does not have the same label so their support is equal to 0.
\end{example}

Between two graphs, there are many possible vertex matches, the \emph{best vertex match} is defined as the one that has the maximal total vertex support and edge support. In our discussion, when we mention a vertex match we always refer to the best vertex match. 

Denote $G = \{g_1=(E_1, V_1), g_2=(E_2, V_2),\cdots, g_m=(E_m, V_m)\}$ as a set of $m$ graphs. Given any  graph $g=(E, V)$, for every $g_i$ denote $\phi_i(g, g_i)$ as the best vertex match between $g$ and $g_i$. The total support of a vertex $v \in V$ or an edge $e \in E$ %as
is defined as follows:
\begin{itemize}
    \item $\operatorname{support}(e) = \sum_{i=1}^m s_{\phi_i}(e)$
    \item $\operatorname{support}(v) = \sum_{i=1}^m s_{\phi_i}(v)$
\end{itemize}

Given a support threshold $\theta$, a graph $g$ is called $\theta$-supported by $G$ if for any node $v \in V$ or any edge $e \in E$,  $\operatorname{support}(v) \geq \theta$ and $\operatorname{support}(e) \geq \theta$.

\begin{example}
In Figure \ref{fig:running_example}, graph $g$ is  $\theta$-supported by $G =\{g_1, g_2, g_3\}$ where $\theta=2$.
\end{example}

Intuitively, an ensemble graph $g$ should have as many common edges and vertices with all the graph predictions as possible. Therefore, we define the graph ensemble problem as follows:

\begin{problem}[Graph ensemble]
Given a support threshold $\theta$ and a collection of graphs $G$, find the graph $g$ that is $\theta$-supported by $G$ and has the largest sum of vertex and edge supports. 
\label{problem:graph ensemble}
\end{problem}

\begin{theorem}
Finding the optimal $\theta$-supported graph with the largest total of support is NP-Hard.
\label{theorem:np-hard}
\end{theorem}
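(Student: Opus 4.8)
The plan is to establish NP-hardness by a polynomial-time reduction from a known NP-complete problem. The most natural candidate is the \emph{Maximum Clique} problem (or equivalently Maximum Independent Set), since the graph ensemble objective asks for the \emph{largest} substructure subject to a support constraint, which mirrors the combinatorial flavor of clique. An alternative worth considering is reducing from \textsc{Longest Common Subsequence} over many strings or from \textsc{Maximum Common Subgraph}, but clique tends to give the cleanest gadget because we get full control over which edges can simultaneously appear in a valid ensemble graph.

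\textbf{Construction.} Given a graph $H = (V_H, E_H)$ in which we seek a clique of size $k$, I would build a collection $G = \{g_1, \dots, g_m\}$ of input graphs over a common vertex-label set derived from $V_H$, and pick a threshold $\theta$, so that the $\theta$-supported graph $g$ with the largest total support encodes a maximum clique of $H$. The idea is to let each candidate ensemble graph $g$ correspond to choosing a subset $S \subseteq V_H$ (the vertices of $g$, carrying distinct labels so the best vertex match is forced), and to design the $g_i$'s so that an edge between two labels $u, v$ can accumulate support $\geq \theta$ \emph{only if} $\{u,v\} \in E_H$. Concretely one can use $|E_H|$ or a few well-chosen ``witness'' graphs: for each non-edge of $H$ we ensure the corresponding edge label conflicts in enough of the $g_i$ that its support falls below $\theta$, while for each edge of $H$ we ensure it is present with the right label in at least $\theta$ of the $g_i$. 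Distinct vertex labels guarantee $\operatorname{support}(v)$ is automatically large (equal to the number of $g_i$ containing that label), so the binding constraint is on edges. Then a $\theta$-supported $g$ must be a graph whose edge set is a clique in $H$, and maximizing the total support (dominated by, or tie-broken toward, the number of edges) forces $g$ to realize a maximum clique.

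\textbf{Correctness.} I would then argue both directions: (i) from a clique of size $k$ in $H$ one reads off a $\theta$-supported graph on $k$ labeled vertices with $\binom{k}{2}$ supported edges, giving total support at least some explicit function of $k$; and (ii) conversely, any $\theta$-supported $g$ of total support at least that value must, by the support bookkeeping, use only label-pairs that are edges of $H$ and enough of them to force a clique of size $k$. A subtle point is that $g$ is an arbitrary graph, not a subgraph of anything given, so I must make sure the labels appearing in $g$ that do \emph{not} come from $V_H$, or edges with fresh labels, cannot help: assigning any off-script label gives support $0 < \theta$, so such choices are infeasible. I also need the best-vertex-match definition to cooperate — using all-distinct labels within each $g_i$ (and matching label sets across the $g_i$) makes the best match essentially unique up to the identity on shared labels, so $s_{\phi_i}$ is well-defined and computable, keeping the reduction polynomial.

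\textbf{Main obstacle.} The delicate part is the interaction between the \emph{global} ``best vertex match'' optimization (which is itself a combinatorial maximization over bijections) and the per-edge/per-vertex support thresholds: I must design the gadget so that the adversarially-chosen best matches cannot be exploited to sneak extra support onto a non-edge of $H$. Ensuring this typically forces a careful choice of how many copies of each label and each edge label appear across the $g_i$, and a counting argument that no alternative matching beats the ``intended'' one on the relevant edges. Once the gadget is pinned down, verifying membership in NP of the decision version (guess $g$ and all the matches, check support in polynomial time — noting $g$ need not be larger than the union of the $g_i$, so its size is polynomially bounded) is routine, and combined with the reduction this yields NP-hardness, completing the proof.
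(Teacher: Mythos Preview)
Your reduction from \textsc{Clique} has a structural gap that breaks the argument. Suppose, as you propose, that every vertex carries a distinct label drawn from $V_H$ and the best vertex match $\phi_i$ is therefore forced to be the label-identity map. Then for any candidate ensemble graph $g$, $\operatorname{support}(v)$ for a vertex labeled $u\in V_H$ is simply the number of $g_i$ containing that label, and $\operatorname{support}(e)$ for an edge between labels $u,v$ is the number of $g_i$ containing that edge with the right edge label---both computable in polynomial time. Crucially, the objective is now \emph{monotone}: adding to $g$ any vertex with support $\ge\theta$, or any edge with support $\ge\theta$, only increases the total support and never violates the $\theta$-support constraint elsewhere. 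Hence the optimal $g$ is the maximal one: it contains \emph{every} label with support $\ge\theta$ and \emph{every} edge-pair with support $\ge\theta$. In your gadget this is all of $V_H$ together with all of $E_H$, i.e.\ $H$ itself, not a clique on some subset. Nothing forces $g$ to be complete on its vertex set, so your claim that ``a $\theta$-supported $g$ must be a graph whose edge set is a clique in $H$'' is false; it need only be a subgraph of $H$, and the largest-support such subgraph is $H$.

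The paper instead follows (essentially) the Maximum Common Subgraph route you mentioned and set aside. It takes $m=2$, $\theta=2$, and assigns \emph{the same} label to all vertices and all edges of $g_1,g_2$. Now the best vertex match is no longer pinned down by labels, and a graph is $\theta$-supported precisely when it is a common subgraph of $g_1$ and $g_2$; maximizing total support then coincides with the Maximum Common Edge Subgraph problem. The hardness of Problem~\ref{problem:graph ensemble} lives in the matching step---exactly the piece your distinct-label trick was designed to neutralize, and neutralizing it is what renders the residual optimization polynomial.
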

\begin{proof}
We prove the NP-Hardness by reduction to the \emph{Maximum Common Edge Subgraph} (MCES) problem, which is known to be an NP-Complete problem \citep{npcomplete}. Given two graphs $g_1$ and $g_2$, the MCES problem finds a graph $g$ that is a common subgraph of  $g_1$ and $g_2$ and the number of edges in $g$ is the largest. Consider the following instance of the Graph Ensemble problem with $\theta = 2$, and $G= \{g_1, g_2\}$ created from the graphs in the MCES problem. Assume that all vertices and all edges of $g_1$ and $g_2$ have the same label $A$. 

Since $\theta=2$, a  $\theta$-supported graph is  also a common subgraph between $g_1$ and $g_2$ and vice versa. Denote $g_s$ and $g_e$ as the common subgraph between $g_1$ and $g_2$ with the largest support and the  largest common edge, respectively. We can show that $g_s$ has as many edges as $g_e$. In fact, since $g_s$ is the largest supported common subgraph there is no vertex  $v \in g_e$ such that $v \not\in g_s$ because otherwise we can add $v$ to $g_s$ to create a larger supported graph. For any edge $e = (v_1, v_2) \in g_e$, since both vertices $v_1$ and $v_2$ also appear in $g_s$, the edge $e = (v_1, v_2)$ must also be part of $g_s$ otherwise we can add this edge to $g_s$ to create a subgraph with a larger support. Therefore, $g_s$ has as many edges as $g_e$, which is also a solution to the MCES problem. 
 \end{proof}
 
\section{Graph ensemble algorithm}
\label{sec:algorithm}
In this section, we discuss a heuristic algorithm based on the strategy \emph{``Please correct me if I am wrong!"} to solve Problem \ref{problem:graph ensemble}. The main idea is %very simple,
to improve a pivot graph based on other graphs. Specifically, 
starting with a pivot graph $g_i$ ($i=1,2,\cdots,m$), we collect the votes from the other graphs for every existing vertex and existing/non-existing edges to correct $g_i$. We call the proposed algorithm \emph{Graphene} which stands for Graph Ensemble algorithm. The key steps of the algorithm are provided in the pseudo-code in Algorithm \ref{alg:graphene}. 

\begin{algorithm}[H]
\SetAlgoLined
\textbf{Input}: a set of graphs $G = \{g_1, g_2,\cdots, g_m\}$ and the support threshold $\theta$ \\
\textbf{Output}: an ensemble graph $g^e$\\
\textbf{Algorithm}: Graphene($G, \theta$)\\
 \For{$i \gets 1$ \KwTo  $m$}{
    $g_\text{pivot} \leftarrow g_i$ \\
    $V \leftarrow$ Initialise($g_\text{pivot}$) \\
    \For{$j \gets 1$ \KwTo  $m$}{
        \uIf {$j \ne i$}{
         $V  \leftarrow V$ $\cup$ getVote($\phi(g_\text{pivot}, g_j$))
        }
    }
    $g^e_i \leftarrow Filter(V, \theta)$
 }
 
 $g^e \leftarrow $ the graph with the largest support among $g^e_1,\cdots,g^e_m$\\
 \textbf{Return} $g^e$
 
 \caption{Graph ensemble with the Graphene algorithm.}
 \label{alg:graphene}
\end{algorithm}

For example, in Figure \ref{fig:running_example}, the algorithm starts with the first graph $g_1$ and considers it as a pivot graph $g_\text{pivot}$. In the first step, it creates a table to keep voting statistics $V$ initialized with the vote counts for every existing vertex and edge in $g_\text{pivot}$. To draw additional votes from the other graphs, it performs the following subsequent steps:
\begin{itemize}
    \item Call the function $\phi(g_1, g_i)$ $(i=2,3,\cdots,m)$ to get the best  bijective mapping $\phi$ between the vertices of two graphs $g_1$ and $g_i$ (with a little bit abuse of notation we drop the index $i$ from $\phi_i$ when $g_i$ and $g_\text{pivot}$ are given in the context). For instance, the best vertex match between $g_1$ and $g_2$ is $\phi = 1 \rightarrow 3, 2 \rightarrow 2, 3 \rightarrow 1$ because that vertex match has the largest number of common labeled edges and vertices.
    \item Enumerate the matching vertices and edges to update the voting statistics accordingly. For instance,  since the vertex 3 in $g_1$ with label $B$ is mapped to the vertex 1 in $g_2$ with label $C$, a new candidate label $C$ is added to the table for the given vertex. For the same reason, we add a new candidate label $Z$ for the edge $(1,2)$. For all the other edges and vertices where the labels are matched the votes are updated accordingly.
\end{itemize}
Once the complete voting statistics $V$ is available, the algorithm filters the candidate labels of edges and vertices using the provided support threshold $\theta$ by calling the function $Filter(V, \theta)$ to obtain an ensemble graph $g^e_i$. For special cases, when disconnected graphs are not considered as a valid output, we keep all edges of the pivot graph even its support is below the threshold.  On the other hand, for the graph prediction problem, where a graph is only considered a valid graph if it does not have multiple edges between two vertices and multiple labels for any vertex,  we remove all candidate labels for vertices and edges  except the one with the highest number of votes.

Assume that the resulting ensemble graph that is created  by using $g_i$ as the pivot graph is denoted as $g^e_i$. The final ensemble graph $g^e$ is chosen among the set of graphs $g^e_1, g^e_2, \cdots, g^e_m$ as the one with the largest total support. Recall that $\phi(g_\text{pivot}, g_i)$ finds the best vertex match between two graphs. In general, the given task is computationally intractable. However, for labeled graphs like AMR a heuristic was proposed \citep{cai-knight-2013-smatch} to approximate the best match by a hill-climbing algorithm. It first starts with the candidate with labels that are mostly matched. The initial match is modified iteratively to optimize the total number of matches with a predefined number of iterations (default value set to 5). This algorithm is very efficient and effective, it was used to calculate the Smatch score in \citep{cai-knight-2013-smatch} so we reuse the same implementation to approximate  $\phi(g_\text{pivot}, g_i)$ (report on average running time can be found in the supplementary materials).
%\begin{theorem}
%The Graphene algorithm is an approximation algorithm of Problem \ref{problem:graph ensemble} with the approximation factor [To be determined].
%\label{theorem:approximation}
%\end{theorem}

\section{Experiments}
\label{sec:experiments}
We compare our \emph{Graphene} algorithm against four previous state-of-the-art models on different benchmark datasets. Below we describe our experimental settings.

%Four state-of-the-art models are used as a comparison and to evaluate the performance of the Graphene algorithm. The experimental settings are described below.

\subsection{Experimental settings}
\label{sec:settings}
\subsubsection{Model settings}

\paragraph{SPRING}
The SPRING model, presented in \citep{spring}, tackles Text-to-AMR and AMR-to-Text as a symmetric transduction task. The authors show that with a pretrained encoder-decoder model, it is possible to obtain state-of-the-art performances in both tasks using a simple seq2seq framework
%and 
by
predicting %or encoding
linearized graphs. In our experiments, we used the pretrained models provided in \citep{spring}\footnote{Available for download at \url{https://github.com/SapienzaNLP/spring}}. In addition, we trained 3 more models using different random seeds following the same setup described in \citep{spring}. Blink \citep{blink} was used to add wiki tags to the predicted AMR graphs as a post-processing step.  

\paragraph{T5}
The T5 model, presented in \citep{t5}, introduces a unified framework that 
%considers every language problem 
models a wide range of NLP tasks 
as a text-to-text problem. We follow the same idea proposed in \citep{xu2020improving} to train a model to transfer a text to a linearized AMR graph based on T5-Large.
%train T5 large as a text to AMR problem. 
The data is preprocessed by linearization and removing wiki tags using the script provided in \citep{amrlib}. In addition to the main task, we added a new task that takes as input a sentence and predicts the concatenation of word senses and arguments provided in the English Web Treebank dataset \citep{google}. The model is trained with 30 epochs. We use  ADAM optimization with a learning rate of 1e-4 and a mini-batch size of 4. Blink \citep{blink} was used to add wiki tags to the predicted AMR graphs during post-processing.

\paragraph{APT}
\citep{zhou2021apt} proposed a transition-based AMR parser\footnote{Available under https://github.com/IBM/ transition-amr-parser.} %modeled with a single 
based on Transformer \citep{vaswani2017transformer}. It combines hard-attentions over sentences with a target side action pointer mechanism to decouple source tokens from node representations.  For model training in our experiments, we use the setup described in \citep{zhou2021apt} and added 70K model-annotated silver data sentences to the training data, which was created from the 85K sentence set in \citep{youngsuk2020silver} with self-learning described in the paper.

\paragraph{Cai\&Lam}
The model proposed in \citep{cai-lam-2020-amr} treats AMR parsing as a series of dual decisions (i.e., \emph{which parts of the sequence to abstract}, and  \emph{where in the graph to construct}) on the input sequence and constructs the AMR graph incrementally. 
Following \citep{cai-lam-2020-amr}, we use Stanford CoreNLP\footnote{Available at \url{https://github.com/stanfordnlp/stanza/}} for tokenization, lemmatization, part-of-speech tagging, and named entity recognition. We apply the pretrained model provided by the authors\footnote{The model ``AMR2.0+BERT+GR'' can be downloaded from \url{https://github.com/jcyk/AMR-gs}} to all testing datasets and follow the same pre-processing and post-processing steps for graph re-categorization.  

\paragraph{Graphene (our algorithm)}
The only hyperparameter of the Graphene algorithm is the threshold $\theta$. A popular practice for ensemble methods via voting strategy \citep{dong2020survey} is to consider the labels that get at least 50\% of the total number of votes, therefore we set the threshold $\theta$ such that $\frac{\theta}{m} \geq 0.5$ (where $m$ is the number of models in the ensemble). In all experiments, we used a Tesla GPU V100 for model training and used 8 CPUs for making an ensemble.

\subsubsection{Evaluation}
We use the script\footnote{\url{https://github.com/mdtux89/amr-evaluation}} provided in \citep{damonte-etal-2017-incremental} to calculate the Smatch score \citep{cai-knight-2013-smatch}, the most relevant metric for measuring the similarity between the predictions and the gold AMR graphs. The overall Smatch score can be broken down into different dimensions, including the  followings sub-metrics:
\begin{itemize}
    \item Unlabeled (Unl.): Smatch score after removing all edge labels
    \item No WSD (NWSD): Smatch score while ignoring Propbank senses 
    \item NE: F-score on the named entity recognition (:name roles)
    \item  Wikification (Wiki.): F-score on the wikification (:wiki roles)
    \item Negations (Neg.): F-score on the negation detection (:polarity roles)
    \item Concepts (Con.): F-score on the concept identification task
    \item Reentrancy (Reen.): Smatch computed on reentrant edges only
    \item SRL: Smatch computed on :ARG-i roles only
\end{itemize}

\subsubsection{Datasets}

%\textbf{\textcolor{red}{LN: Made changed - Table's caption must be on the top and Figure's caption must be on the bottom. This is the rule of ICML and NeurIPS.}}

Similarly to \citep{spring}, we use five standard benchmark datasets \citep{data} to evaluate our approach. Table \ref{tab:data} shows the statistics of the datasets. AMR 2.0 and AMR 3.0 are divided into train, development and testing sets and we use them for \emph{in-distribution} evaluation in %Subsection
Section
\ref{subsec:in-distribution}. Furthermore, the models trained on AMR 2.0 training data are used to evaluate \emph{out-of-distribution} prediction on the BIO, the LP and the New3 dataset (See %Subsection
Section \ref{subsec:out-of-distribution}).
\begin{table}[h]
\centering
\caption{\label{tab:data}Benchmark datasets. All instances of BIO, LP, and New3 are used to test models in out-of-distribution evaluation. For AMR 2.0 and 3.0, the models are trained on the training dataset, validated on the development dataset. We report results on testing sets in the in-distribution evaluation.}
\begin{tabular}{l|rrrrr}
\hline
\textbf{Datasets} & \textbf{AMR 2.0} & \textbf{AMR 3.0} & \textbf{BIO} & \textbf{Little Prince (LP)} & \textbf{New3} \\ \hline
Training         & 36,521             & 55,635             & n/a         & n/a                        & n/a           \\
Dev         & 1,368             & 1,722             & n/a         & n/a                        & n/a           \\
Test         & 1,371             & 1,898             & 6,952         & 1,562                        & 527           \\ \hline
\end{tabular}
\end{table}
\subsection{In-distribution evaluation}
\label{subsec:in-distribution}
In the same spirit of \citep{spring}, we evaluate the approaches when training and test data belong to the same domain. Table \ref{tab:amr} shows the results of the models on the test split of the AMR 2.0 and AMR 3.0 datasets. The metrics reported for SPRING correspond to the model with the highest Smatch score among the 4 models(the checkpoint plus the 3 models with different random seeds).  

%\textcolor{red}{I used Spacy amrlib  preprocessing scripts to preprocess the AMR 2.0 and 3.0 files. The scripts load the data and turn into  AMR graphs, preprocesses the graphs before outputs the AMR back to text format. The result was calculated using the preprocessed gold files (AMR 2.0 and AMR 3.0) and spring preprocessed gold files (BIO, LP and New3). These gold files are slightly different than the original gold files (see the reason https://github.com/snowblink14/smatch/issues/39). I am re-running all the experiments against the gold files that were created by concatenating the test files manually. The results are not different, but we need to re-run all the experiments to make sure that it is correct. I will put the new results beside the old results and comment out the old ones in this PDF}

\begin{table}[htbp]
\centering
\caption{\label{tab:amr} Results on the test splits of the AMR 2.0 and AMR 3.0 dataset.
%in the in-distribution evaluation.
}
\begin{tabular}{l|lllllllll}
\hline
\textbf{Models} & \textbf{Smatch} & \textbf{Unl.} & \textbf{NWSD} & \textbf{Con.} & \textbf{NE}    & \textbf{Neg.}  & \textbf{Wiki.} & \textbf{Reen.} & \textbf{SRL}   \\ \hline
\textbf{AMR 2.0}  &  &   &   &  &   &  &  &  &  \\
SPRING        & 84.22           & 87.38          & 84.72          & 89.98          & 90.77          & 72.65          & 82.76 & 74.30           & 82.89          \\
APT        & 82.70           & 86.18          & 83.23          & 89.48          & 90.20          & 67.27          & 78.87 & 73.19           & 82.01          \\
T5              & 82.98           & 86.17          & 83.43          & 89.85          & 90.65          & 73.43          & 77.99          & 72.44           & 82.02          \\
Cai\&Lam & 80.15           & 83.60          & 80.66          & 87.39          & 82.25          & \textbf{78.09}          & \textbf{85.36}          & 66.46           & 77.35          \\

\emph{Graphene 4S}     & 84.78  & 87.96  & 85.29 & 90.64 & 92.19 & 75.22 & 83.88          & 71.42            & 83.46 \\
\emph{Graphene Support}     & 85.85  & 88.68  & 86.35 & 91.23 & 92.30 & 77.01 & 84.63          & 74.49            & 84.41 \\
\emph{Barzdins}     & 85.93  & 88.85  & 86.63 & 91.16 & \textbf{92.51} & 75.73 & 84.01          & \textbf{76.53}            & 84.64\\
\emph{Graphene Smatch}     & \textbf{86.26}  & \textbf{89.03}  & \textbf{86.75} & \textbf{91.39} & 92.47 & 76.72 & 84.61          & 76.25            & \textbf{84.85}\\
\hline \hline
\textbf{AMR 3.0}  &  &   &   &  &   &  &  &  &  \\

SPRING       & 83.25           & 86.40          & 83.71          & 89.38          & 87.80          & 72.94          & 81.22          & 73.33  & 81.97          \\

APT        & 80.57           & 83.96          & 81.07          & 88.38          & 86.82          & 68.69          & 76.88 & 70.78           & 80.17          \\
T5              & 82.17           & 85.22          & 82.66          & 89.03          & 86.99          & 72.59          & 73.78          & 72.18           & 81.18          \\
\emph{Graphene 4S}              & 83.77           & 86.89          & 84.23          & 90.09          & 88.27          & 74.60          & 81.92          & 70.22           & 82.46          \\
\emph{Graphene Support}              & 84.41           & 87.35          & 84.83          & 90.51          & 88.64          & 74.76          & \textbf{82.25}          & 71.93           & 83.15          \\
\emph{Barzdins}              & 84.74           & 87.66          & 85.21          & 90.40          & 88.63          & \textbf{74.78}          & 81.49          & \textbf{75.48}           & 83.55          \\
\emph{Graphene Smatch}              & \textbf{84.87}           & \textbf{87.76}          & \textbf{85.31}          & \textbf{90.57}          & \textbf{88.81}          & 74.73          & 82.15          & 74.06           & \textbf{83.72}          \\

\hline
\end{tabular}
\end{table}

For the ensemble approach, we report the result when Graphene is an ensemble of four SPRING checkpoints, denoted as \emph{Graphene 4S}. The ensemble of all the models including four SPRING checkpoints, APT, T5, and Cai\&Lam is denoted as \emph{Graphene All}. For the AMR 3.0 dataset, the Cai\&Lam model is not available so the reported result corresponds to an ensemble of all six models. 

We can see that Graphene successfully leverages the strength of all the models and provides better prediction both in terms of the overall Smatch score and sub-metrics. In both datasets, we achieve the state-of-the-art results with %accuracy
performance gain of 1.6 and 1.2 Smatch points in AMR 2.0 and AMR 3.0 respectively. Table \ref{tab:amr} shows that by combining predictions from four checkpoints of the SPRING model, Graphene 4S provides better results than any individual models. The result is improved further when increasing the number of ensemble models, indeed \emph{Graphene All} improves  \emph{Graphene 4S} further and outperforms the individual models in terms of the overall Smatch score.     

\subsection{Out-of-distribution evaluation}
\label{subsec:out-of-distribution}
In contrast to in-distribution evaluation, we use the models trained with AMR 2.0 data to collect AMR predictions for the testing datasets in the domains that differ from the AMR 2.0 dataset. The purpose of the experiment is to evaluate the ensemble approach under out-of-distribution settings. 

Table \ref{tab:out-of-distribution} shows the results of our experiments. Similar to the in-distribution experiments, the \emph{Graphene 4S} algorithm achieves better results than other individual models, while the \emph{Graphene All} approach improves the given results further. We achieve the new  state-of-the-art results in these benchmark datasets (under out-of-distribution settings). This result has an important practical implication because in practice it is very common not to have labeled AMR data for domain-specific texts. After all, the labeling task is very time-demanding. Using the proposed ensemble methods we can achieve better results with domain-specific data not included in the training sets.

\begin{table}[]
\centering
\caption{\label{tab:out-of-distribution}
Results of out-of-distribution evaluation on the BIO, New3, and Little Prince dataset.}
\begin{tabular}{l|lllllllll}
\hline
\textbf{Models} & \textbf{Smatch} & \textbf{Unl.} & \textbf{NWSD} & \textbf{Con.} & \textbf{NE}    & \textbf{Neg.}  & \textbf{Wiki.} & \textbf{Reen.} & \textbf{SRL}   \\ \hline
\textbf{BIO}  &  &   &   &  &   &  &  &  &  \\
SPRING          & 60.52            & 65.33          & 61.42          & 67.76          & \textbf{33.92} & 65.68          & 3.80           & 51.19           & 62.86          \\
APT       & 51.23           & 56.27          & 51.81          & 58.22          & 15.68          & 52.91          & 3.62  & 43.53           & 54.24          \\
T5              & 58.89           & 63.86          & 59.69          & 66.63          & 30.42          & 65.11          & 2.46           & 48.56           & 61.47          \\
Cai\&Lam & 42.22           & 49.78          & 42.85          & 47.10          & 5.19          & 51.42          & \textbf{7.32}          & 39.23           & 51.00          \\

\emph{Graphene 4S}        & 61.51  & 66.22 & 62.28 & 68.48 & 33.02          & 68.24 & 4.46           & 50.40  & 63.70 \\
\emph{Graphene Support}       & 62.29  & 66.89 & 63.07 & 68.64 & 32.62          & 69.48 & 4.54           & 52.06  & 64.21 \\
\emph{Barzdins}       & 62.05  & 66.84 & 62.88 & 68.43 & 32.73          & 68.11 & 4.23           & 52.22  & 64.08 \\
\emph{Graphene Smatch}       & \textbf{62.80}  & \textbf{67.39} & \textbf{63.58} & \textbf{68.94} & 32.82          & \textbf{69.13} & 4.37           & \textbf{52.58}  & \textbf{64.56} \\
\hline \hline
\textbf{New3}  &  &   &   &  &   &  &  &  &  \\
SPRING          & 74.66           & 78.99          & 75.21          & 82.38          & 67.52          & 67.48          & 67.20          & 66.47  & 75.65          \\

APT        & 71.06           & 75.92          & 71.58          & 80.34          & 65.65          & 67.08          & 57.14 & 63.02           & 73.40          \\
T5              & 73.04            & 77.30 & 73.68          & 82.65          &   68.24        & 64.20          & 56.42          & 64.65           & 75.03 \\
Cai\&Lam              & 60.81            & 66.00 & 61.29          & 72.79          & 45.60          & 59.57          & 46.39          & 57.70           & 68.87 \\\hline
\emph{Graphene 4S}       & 74.84  & 79.23          & 75.30 & 82.56 & 69.98 & 69.51 & \textbf{68.34}          & 63.53           & 76.31          \\
\emph{Graphene Support}      & 75.60  & 79.64          & 76.14 & 83.08 & 68.40 & 69.62 & 67.98          & 67.16           & 76.88         \\
\emph{Barzdins}      & 75.87  & 80.22          & 76.41 & 83.39 & \textbf{72.00} & \textbf{68.75} & 67.52          & 68.54           & 77.52         \\
\emph{Graphene Smatch}      & \textbf{76.32}  & \textbf{80.26}        & \textbf{76.86} & \textbf{83.62} & 70.60 & 68.39 & 67.93          & \textbf{68.93}           & \textbf{77.79}         \\
\hline \hline
\textbf{Little Prince}  &  &   &   &  &   &  &  &  &  \\
SPRING          & 77.85           & 82.31          & 78.85          & 84.68          & 60.53          & 70.72          & 60.53          & \textbf{68.28}  & 77.78          \\

APT        & 75.21           & 80.07          & 76.12          & 85.29          & 65.15          & 67.92          & 69.70 & 63.28           & 75.31          \\
T5              & 77.66            & 81.99 & 78.53          & 85.12          &   58.06        & 72.33          & 59.35          & 67.03           & 78.30 \\
Cai\&Lam              & 71.03            & 75.91 & 72.07          & 80.18          & 22.73          & 57.51          & 31.50          & 59.29           & 72.02 \\\hline
\emph{Graphene 4S}       & 77.91  & 82.40          & 78.86 & 84.91 & 61.54 & 73.58 & 60.65          & 64.77           & 78.12          \\
\emph{Graphene Support}      & 78.54  & 82.81          & 79.44 & 85.52 & \textbf{64.05} & 75.11 & 63.45          & 67.83           & 78.72          \\
\emph{Barzdins}      & 79.21  & 83.47          & 80.12 & 85.74 & 59.21 & 74.06 & 61.84          & \textbf{70.45}           & 79.35       \\
\emph{Graphene Smatch}      & \textbf{79.52}  & \textbf{83.68}          & \textbf{80.44} & \textbf{86.10} & 63.16 & \textbf{75.11} & \textbf{63.89}          & 70.19           & \textbf{79.66} 
\\
\hline
\end{tabular}
\end{table}

\begin{figure*}[htb]
\center{\includegraphics[width=1.0\textwidth]{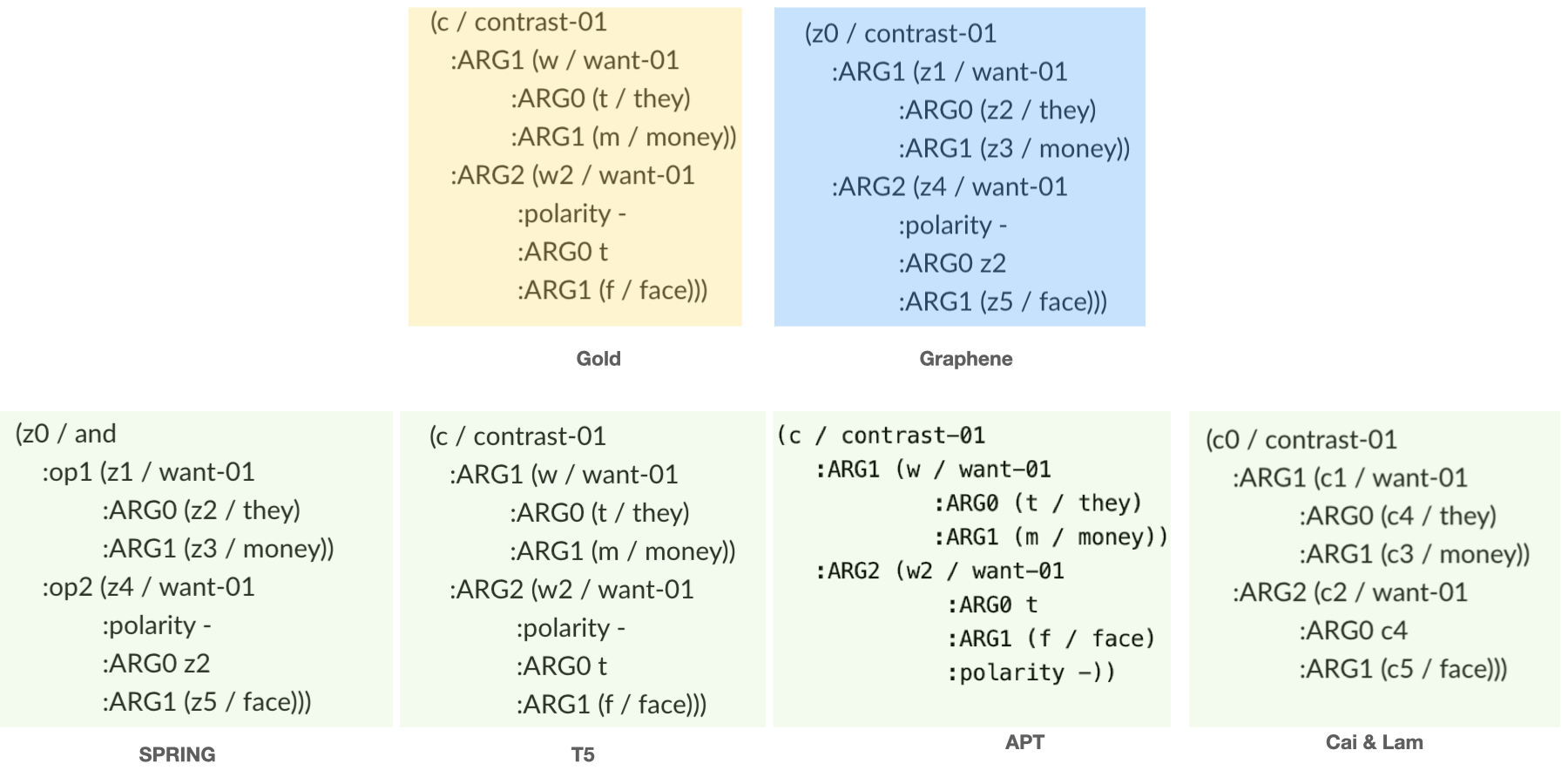}}
\caption{\label{fig:ensemble works} The gold AMR and the ensemble AMR graph of SPRING, T5, APT and Cai\&Lam using the Graphene algorithm for the sentence \emph{``They want money, not the face"}. }
\end{figure*}

\subsection{How the ensemble algorithm works}
We explore a few examples to demonstrate the reason why the ensemble method works. Figure \ref{fig:ensemble works} shows a sentence with a gold AMR in Penman format and a list of AMRs corresponding to the prediction of SPRING \citep{spring}, T5  \citep{t5}, APT \citep{zhou2021apt}, Cai and Lam \citep{cai-lam-2020-amr} parser and the ensemble graph given by  Graphene.

In this particular example, with the sentence \emph{``They want money, not the face"}, the AMR prediction from SPRING is inaccurate. Graphene corrects the prediction thanks to the votes given from the other models. In particular, the label $and$ of the root node $z_0$ of SPRING prediction was corrected to $contrast-01$ because T5, APT and Cai\&Lam parsers all vote for $contrast-01$. On the other hand, the labels $:op1$ and $:op2$  of the edges $(z_0, z_1)$  and $(z_0, z_4)$ were modified to have the correct labels $:ARG1$ and $:ARG2$ accordingly thanks to the votes from the other models. We can also see that even though the Cai\&Lam method misses polarity prediction, since the other models predict polarity correctly, the ensemble prediction does not inherit this mistake. Putting everything together, the prediction from Graphene perfectly matches with the gold AMR graph in this example.

\begin{table}[htp]
\centering
\caption{\label{tab:support}
The average total support and Smatch score of the prediction from SPRING, Graphene with SPRING as a pivot and Graphene considering every prediction as a pivot respectively. The support is highly correlated with  Smatch score.}
\small

\begin{tabular}{|l|rr|rr|rr|rr|rr|}
\hline        
              & \multicolumn{2}{c|}{\textbf{AMR 2.0}}
              & \multicolumn{2}{c|}{\textbf{AMR 3.0}} 
              & \multicolumn{2}{c|}{\textbf{BIO}}
              & \multicolumn{2}{c|}{\textbf{LP}}
              & \multicolumn{2}{c|}{\textbf{New3}}\\
  & Sup.       & Smat. & Sup.       & Smat.               & Sup.   & Smat.  & Sup.   & Smat.  & Sup.   & Smat.          \\ \hline
SPRING       & 170.15                    & 84.08 & 136.90                   & 83.14                & 166.86       & 60.52 & 69.33 & 77.85 & 118.27 & 74.66  \\
SPR. pivot       & 172.70                    & 84.70 & 139.42                    & 83.73                & 169.97       & 61.56 & 70.97 & 78.22 & 120.85& 74.83  \\
Graphene          & \textbf{175.73}                    & \textbf{85.85} & \textbf{142.07}                    & \textbf{84.43}                & \textbf{179.38}  & \textbf{62.29} & \textbf{72.64} & \textbf{78.54} & \textbf{123.62}      & \textbf{75.60}  \\\hline
\end{tabular}
\end{table}

%\begin{table}[htp]
%\centering
%\caption{\label{tab:support} \textcolor{red}{ old results}
%The average total support and Smatch score of the prediction from SPRING, Graphene with SPRING as a pivot and Graphene considering every prediction as a pivot respectively. The support is highly correlated with  Smatch score.}
%\begin{tabular}{|l|rr|rr|rr|rr|rr|}
%\hline        
%              & \multicolumn{2}{c|}{\textbf{AMR 2.0}}
%              & \multicolumn{2}{c|}{\textbf{AMR 3.0}} 
%              & \multicolumn{2}{c|}{\textbf{BIO}}
%              & \multicolumn{2}{c|}{\textbf{LP}}
%              & \multicolumn{2}{c|}{\textbf{New3}}\\
%  & Sup.       & Smatch & Sup.       & Smatch               & Sup.   & Smatch  & Sup.   & Smatch  & Sup.   & Smatch          \\ \hline
%SPRING       & 170.15                    & 84.08 & 136.90                   & 83.60                & 166.86       & 60.52 & 69.33 & 79.60 & 118.27 & 74.64  \\
%SPRING pivot       & 172.70                    & 84.70 & 139.42                    & 83.63                & 169.97       & 61.57 & 70.97 & 80.03 & 120.85& 74.79  \\
%Graphene          & \textbf{175.73}                    & \textbf{85.59} & \textbf{142.07}                    & \textbf{84.43}                & \textbf{179.38}  & \textbf{62.27} & \textbf{72.64} & \textbf{80.48} & \textbf{123.62}      & \textbf{75.57}  \\\hline
%\end{tabular}
%\end{table}

The Graphene algorithm searches for the graph that has the largest support from all individual graphs. One question that arises from this is whether the support is correlated with the accuracy of AMR parsing. Table \ref{tab:support} shows the support and the Smatch score of three models in the standard benchmark datasets. The first model is  SPRING, while the second one denoted as \emph{Graphene SPRING pivot} starts with a SPRING prediction as a pivot and corrects the prediction using votes from other models. The last model corresponds to the Graphene algorithm that polls the votes while considering every prediction as a pivot for correction and selecting the best one. Since Graphene looks for the best pivot to have better-supported ensemble graphs, the total supports of the Graphene predictions are larger than the Graphene SPRING pivot predictions. From the table, we can also see that the total support is highly correlated to the Smatch score. Namely, Graphene has higher support in all the benchmark datasets and a higher Smatch score than Graphene SPRING pivot. This experiment suggests that by optimizing the total support we can obtain the ensemble graphs with higher Smatch score.

\subsection{Comparison to \cite{barzdins}}
\label{subsec:comparison}
\cite{barzdins}\footnote{We would like to thank Juri Opitz for pointing us to missing references as a very useful feedback for our published work.} proposed a character-level based neural method for parsing texts into AMRs. In order to improve the robustness of the neural parser, the authors proposed an ensemble technique which selects among the prediction graphs the one that has the highest average Smatch when it is compared against the other predictions. The key difference between Barzdins' approach and our approach is that while our solution first modifies the predictions to create new prediction candidates for ensemble prediction, Barzdins' approach directly selects a prediction among existing predictions. 

We provided experimental results to compare to \cite{barzdins}  in Table \ref{tab:amr} and Table \ref{tab:out-of-distribution} to \cite{barzdins}. Recall that our Graphene algorithm has two stages. In the first one, it modifies all the predictions to make new pivot graphs that are most supported by the other graphs. In the second stage it chooses the best candidate among the pivot and original predictions with respect to the average support. The second stage is very similar to \cite{barzdins}, the only difference is in the objective function where support was used instead of Smatch. For general graph prediction problems when the evaluation metrics is not known in advance Graphene can rely on the support. However, for AMR parsing problem where Smatch is the evaluation metrics, Graphene can also use Smatch as the criteria to select the best candidate similar to \cite{barzdins} does, we denote this version of Graphene as Graphene\_Smatch, the other version that uses support is denoted as  Graphene\_Support. 

As we can see in both Table \ref{tab:amr} and Table \ref{tab:out-of-distribution}, Bazdins's method works very well for 4 out of 5 datasets. Since it targets optimizing the Smatch, it provides better predictions than Graphene\_Support  except for the BIO dataset. However, when Graphene uses Smatch to select the best candidate from the pivoting and the original predictions, Graphene\_Smatch provides slightly better results in AMR 2.0, AMR 3.0 and LP, and outperform Bazdins's method in BIO and New3 datasets. 

\paragraph{Discussion} Our hypothesis is that if the predictions of each individual model are similar to each other and accurate, Bazdins methods and Graphene\_Smatch are comparable. This can be clearly observed in AMR 2.0, AMR 3.0 and LP. But when the predictions from the models are not accurate and differ, the modification step in Graphene helps to correct the predictions and thus it provides much better results in BIO and New3 dataset. Although both algorithms search for a graph that is most similar to all the prediction, Graphene search space is extended to pivoting graphs besides existing graphs so it provides better results.

\section{Related work}
\label{sec:related work}

\paragraph{Ensemble learning.} Ensemble learning is a popular machine learning approach that combines predictions from different learners to make a more robust and more accurate prediction. Many ensembling approaches have been proposed, such as bagging \citep{bagging} or boosting \citep{boosting}, the winning solutions in many machine learning competitions \citep{xgboost}. These methods are proposed mainly for regression or classification problems. Recently, structure output prediction emerges as an important problem thanks to the advances in deep learning research. To apply popular ensembling techniques such as bagging or boosting, it is important to study the ensemble method for combining structure predictions. 

\paragraph{Ensemble structure prediction.} 

Previous studies have explored various ensemble learning approaches for dependency and constituent parsing: \citep{sagae-lavie-2006-parser} proposes a reparsing framework that takes the output from different parsers and maximizes the number of votes for a well-formed dependency or constituent structure; 
\citep{kuncoro-etal-2016-distilling} uses minimum Bayes risk inference to build a consensus dependency parser from an ensemble of independently trained greedy LSTM transition-based parsers with different random initializations. Note that a syntactic tree is a special graph structure in which nodes for a sentence from different parsers are roughly the same. In contrast, we propose an approach to ensemble graph predictions in which both graph nodes and edges can be different among base predictions. 

\paragraph{Ensemble methods for AMR parsing.} Parsing text to AMR is an important research problem. State-of-the-art approaches in AMR parsing are divided into three categories. Sequence to sequence models \citep{spring,konstas2017neural, van2017neural,xu2020improving} consider the AMR parsing as a machine translation problem that translates texts to AMR graphs. The transition-based methods \citep{zhou2021apt} predicts a sequence of actions given the input text, and then the action sequence is turned into an AMR graph using an oracle decoder. Lastly, graph-based methods \citep{cai-lam-2020-amr} directly construct the AMR graphs from textual data. All these methods are complementary to each other and thus ensemble methods can leverage the strength of these methods to create a better prediction, as demonstrated in this paper. Ensemble of AMR predictions from a single type of model is studied in \citep{zhou2021apt} where the authors demonstrated that by combining predictions from three different model's checkpoints they gain %accuracy
performance 
improvement in the final prediction. However, ensemble in sequential decoding requires that all predictions are from the same type of models. It is not applicable for cases when the predictions are from different types of models such as seq2seq, transition-based or graph-based models.  In contrast to that approach, our algorithm is model-agnostic, i.e. it can combine predictions from different models. In our experiments, we have demonstrated the benefit of combining predictions from different models, with additional gains in performance 
%accuracy 
compared to the ensemble of predictions from a single model's checkpoints.  

\paragraph{Comparison to Bazdins et al.} \cite{barzdins} proposed a character-level based neural method for parsing texts into AMRs. In order to improve the robustness of the neural parser, the authors proposed an ensemble technique which selects among the prediction graphs the one that has the highest average SMATCH when it is compared against the other predictions. The key difference between Barzdins' approach and our approach is that while our solution modifies the predictions to create new prediction candidates for ensemble prediction, Barzdins' approach only selects a prediction among existing predictions. In order to demonstrate how creating new prediction candidates from existing predictions before selecting the best candidates help achieving a better prediction we have discussed the new results added to Table \ref{tab:amr} and \ref{tab:out-of-distribution}.

\section{Conclusions and future work}
\label{sec:conclusions and future work}
In this paper, we formulate the graph ensemble problem, study its computational intractability, and provide an algorithm for constructing graph ensemble predictions. We validate our approach with AMR parsing problems. The experimental results show that the proposed approach outperforms the previous state-of-the-art AMR parsers and achieves new state-of-the-art results in five different benchmark datasets. We demonstrate that the proposed ensemble algorithm not only works well for in-distribution but also for out-of-distribution evaluations. This result has a significant practical impact, especially when applying the proposed method to domain-specific texts where training data is not available. Moreover, our approach is model-agnostic, which means that it can be used to combine predictions from different models without the requirement of having an access to model training. In general,  our approach provides the basis for graph ensemble, studying classical ensemble techniques such as bagging, boosting, or stacking for graph ensemble is a promising future research direction that is worth considering to improve the results further.

% \clearpage
% \normal
\bibliographystyle{plainnat}
% \bibliography{references}
% \bibliographystyle{aistats2020}
%\input{refs}
% \newpage
\bibliography{reference}
% \clearpage

% \onecolumn 
% \normal
\appendix

\newpage
  \vbox{%
    \hsize\textwidth
    \linewidth\hsize
    \vskip 0.1in
  \hrule height 4pt
  \vskip 0.25in
  \vskip -5.5pt%
  \centering
    {\LARGE\bf{Ensemble Graph Prediction \\
    Supplementary Material} \par}
      \vskip 0.29in
  \vskip -5.5pt
  \hrule height 1pt
  \vskip 0.09in%
    
  \vskip 0.2in
  }
\section{Running time}
Figure \ref{fig:time} shows the average running time of the Graphene algorithm. The horizontal axis corresponds to the average graph size (the number of triples) while the vertical axis shows the average running time (in seconds). We can see that the running time depends on the average size of the AMR graphs. Since AMR graph size is proportional to the input sentence length, the largest average graph has around 50 triples.  Graphene requires less than 2 seconds on an 8-core CPU machine to make an ensemble from 7 models. 
\begin{figure*}[htb]
\center{\includegraphics[width=\textwidth]{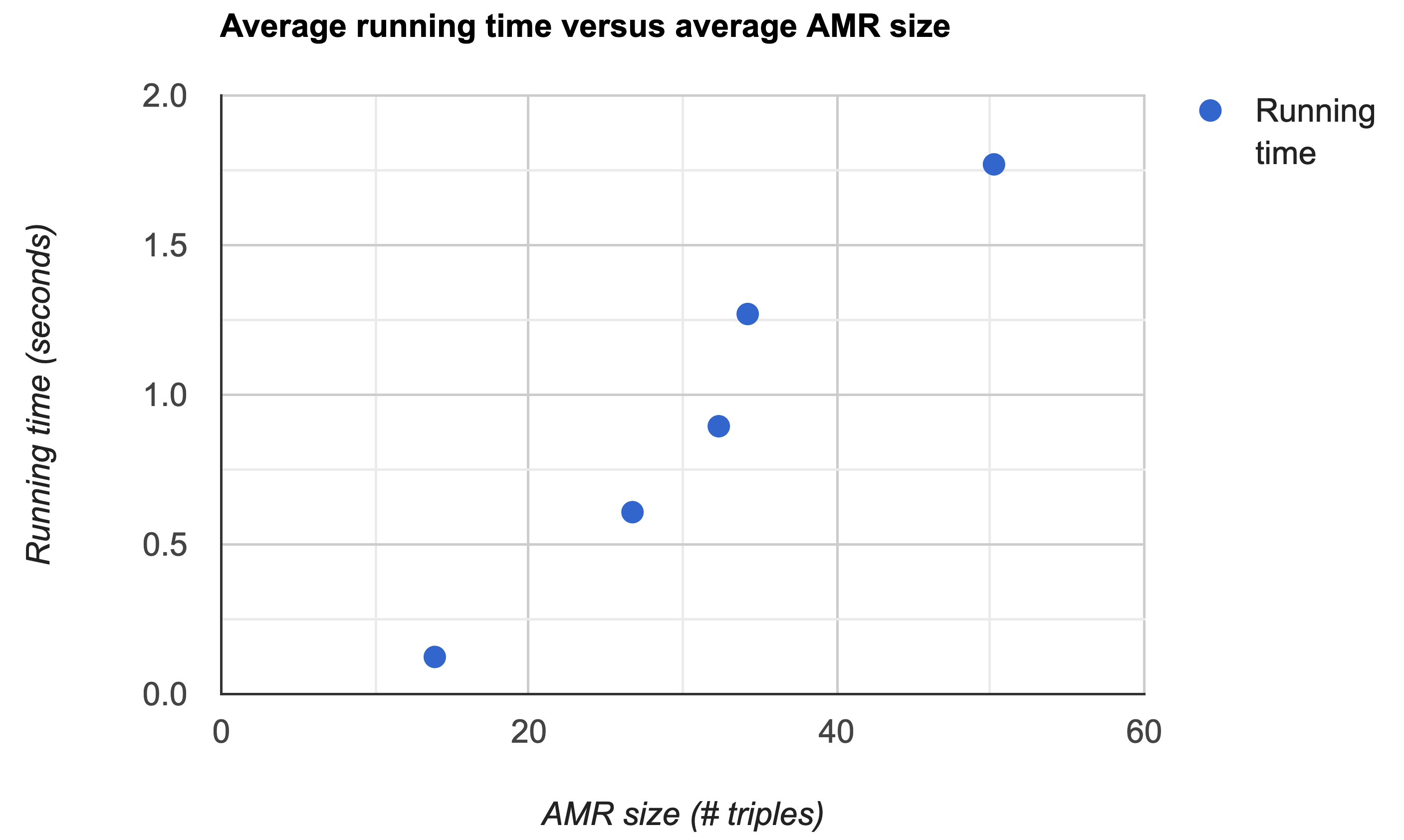}}
\caption{\label{fig:time} Average running time of the Graphene algorithm versus the average graph size in the LP, New3, AMR 3.0, AMR 2.0, and BIO datasets respectively. }
\end{figure*}

\section{On the support threshold}
The popular VotingClassifier algorithm implemented in scikit-learn \footnote{\url{https://scikit-learn.org/stable/modules/generated/sklearn.ensemble.VotingClassifier.html}}  follows the majority vote rule where the label with the most votes is selected as the final prediction. Therefore, we apply the same rule in our experimental settings where setting theta = 0.5 is equivalent to the majority vote rule in classification problems.

If there is an independent validation set, this hyper-parameter can be tuned to choose the right theta value for that dataset. For example, in the AMR 2.0 dataset, the results of ensembling 4 Spring models, APT model, and T5 models on the validation set (the dev split) when theta is varied are reported in Table \ref{tab:theta}.
 
\begin{table}[h]
\centering
\caption{\label{tab:theta}  The results of ensembling 4 Spring models, APT model, and T5 models on the validation set (the dev split) when $\theta$ is varied. On this dev set, $\theta = 0.5$ is a proper choice for AMR 2.0.}
\begin{tabular}{l|lllll}
\hline
\textbf{} & \textbf{$\theta=0.1$} & \textbf{$\theta=0.3$} & \textbf{$\theta=0.5$} & \textbf{$\theta=0.7$} & \textbf{$\theta=0.9$} \\ \hline
Smatch         & 81.64             & 85.01             & \textbf{85.49}         & 85.12                        & 83.68           \\
Precision         & 76.13             & 83.62             & 85.86         & 86.53                        & \textbf{86.54}           \\
Recall         & \textbf{88.00}             & 86.55             & 85.13         & 83.75                        & 81.01           \\ \hline
\end{tabular}
\end{table}

%\begin{table}[h]
%\centering
%\caption{\label{tab:theta} \textcolor{red}{ Old results} The results of ensembling 4 Spring models, APT model, and T5 models on the validation set (the dev split) when $\theta$ is varied. On this dev set, $\theta = 0.5$ is a proper choice for AMR 2.0.}
%\begin{tabular}{l|lllll}
%\hline
%\textbf{} & \textbf{$\theta=0.1$} & \textbf{$\theta=0.3$} & \textbf{$\theta=0.5$} & \textbf{$\theta=0.7$} & \textbf{$\theta=0.9$} \\ \hline
%Smatch         & 0.806             & 0.854             & \textbf{0.857}         & 0.848                        & 0.831           \\
%Precision         & 0.738             & 0.826             & 0.854         & 0.866                        & 0.864           \\
%Recall         & 0.887             & 0.873             & 0.861         & 0.830                        & 0.801           \\ \hline
%\end{tabular}
%\end{table}

Based on this result on an independent dev set, theta=0.5 is the right choice for AMR 2.0. Note that setting theta is a trade-off between precision and recall.

\section{Comparison with median baselines}
Beside the Graphene 4S baseline, we provide the results in Table \ref{tab:median} of the following baseline approaches:
\begin{itemize}
    \item  Uniform sampling: for each set of predictions we sample the graph uniformly at random, this approach is equivalent to the “median” representative from a set.
    \item Ideal median: assumes that the gold AMRs are available for the test set (hence named as "ideal"). We computed the Smatch of each prediction with the gold AMR and use the AMR with the median Smatch score as the final prediction.
\end{itemize}

\begin{table}[h]
\centering
\caption{\label{tab:median} Comparison with median baselines }
\begin{tabular}{l|lllll}
\hline
\textbf{} & \textbf{AMR 2.0} & \textbf{AMR 3.0} & \textbf{BIO} & \textbf{new3} & \textbf{LP} \\ \hline
uniform Sampling         & 82.58             & 82.98             & 56.00         & 71.18                        & 76.17           \\
Ideal median         & 83.80             & 83.66             & 57.72         & 73.06                        & 77.14           \\
Graphene        & \textbf{85.85}             & \textbf{84.41}             & \textbf{62.29}         & \textbf{75.60}                        & \textbf{78.54}           \\ \hline
\end{tabular}
\end{table}

%\begin{table}[h]
%\centering
%\caption{\label{tab:median} \textcolor{red}{Old results} Comparison with median baselines }
%\begin{tabular}{l|lllll}
%\hline
%\textbf{} & \textbf{AMR 2.0} & \textbf{AMR 3.0} & \textbf{BIO} & \textbf{new3} & \textbf{LP} \\ \hline
%uniform Sampling         & 0.824             & 0.828             & 0.560         & 0.711                        & 0.779           \\
%Ideal median         & 0.835             & 0.836             & 0.577         & 0.731                        & 0.788           \\
%Graphene        & \textbf{0.858}             & \textbf{0.844}             & \textbf{0.622}         & \textbf{0.757}                        & \textbf{0.804}           \\ \hline
%\end{tabular}
%\end{table}

\section{Pivot selection}
\begin{figure*}[htb]
\center{\includegraphics[width=\textwidth]{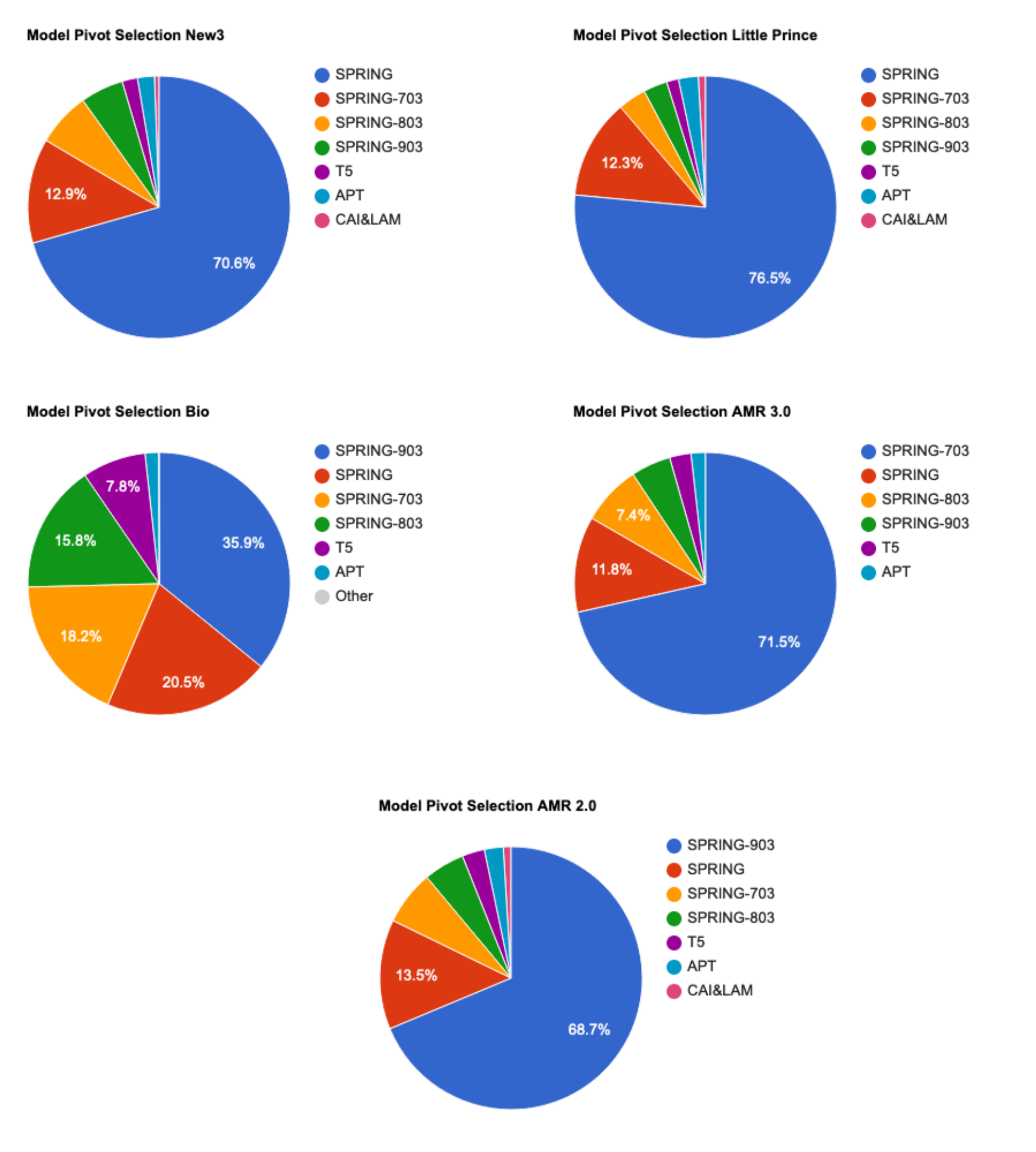}}
\caption{\label{fig:pivot} Pie charts shows the percentage of the number of times each model was selected as the best pivot in the Graphene algorithm. Notice that when tight happens, the ensemble created when  first algorithm is considered as the pivot is selected as the final ensemble graph. }
\end{figure*}
Figure \ref{fig:pivot} shows the pie-charts with the percentage summarizing the number of times that the prediction created when each algorithm is chosen as a pivot graph is selected as the final prediction. Notice that the order of the algorithms matters because when tight happens, the ensemble is chosen from the first algorithm in the list.

The results show that all algorithms contribute to the final predictions. In the Bio dataset where the test data is from a specific domain that differs from the training data domain, Graphene benefits from the model diversity when it leverages predictions from all models effectively.

\section{Robustness on down-sampled training data}

\begin{table}[htp]
\centering
\caption{\label{tab:robustness} When the training data is down-sampled, the gain of using Graphene is enlarged.}
\begin{tabular}{|l|llll|llll|}
\hline        
              & \multicolumn{4}{c|}{\textbf{Sample rate 0.6}}
              & \multicolumn{4}{c|}{\textbf{Sample rate 0.8}} \\
Methods/Data  & AMR 2.0       & BIO & New3 & LP & AMR 2.0       & BIO & New3 & LP  \\ \hline
SPRING 603       & 82.40 & 58.22 & 73.81 & 76.75 & 83.28 & 58.85 & 74.13 & 76.96  \\
SPRING 703       & 82.74 & 57.93 & 73.49 & 76.60 & 83.43 & 59.50 & 74.71 & 77.40  \\
SPRING 803       & 82.85 & 58.72 & 73.42 & 76.68 & 83.39 & 58.97 & 73.57 & 77.39  \\
SPRING 903       & 82.81 & 58.80 & 74.09 & 76.96 & 83.12 & 57.52 & 73.50 & 76.70  \\
T5       & 82.08 & 56.46 & 72.84 & 76.31 & 82.59 & 58.69 & 73.42 & 77.70  \\
Graphene       & \textbf{84.20} & \textbf{61.66} & \textbf{75.01} & \textbf{77.79} & \textbf{84.70} & \textbf{62.23} & \textbf{75.98} & \textbf{78.09}  \\\hline
\end{tabular}
\end{table}

%\begin{table}[htp]
%\centering
%\caption{\label{tab:robustness} \textcolor{red}{Old results} When the training data is down-sampled, the gain of using Graphene for ensembling the predictions is enlarged.}
%\begin{tabular}{|l|llll|llll|}
%\hline        
%              & \multicolumn{4}{c|}{\textbf{Sample rate 0.6}}
%              & \multicolumn{4}{c|}{\textbf{Sample rate 0.8}} \\
%Methods/Data  & AMR 2.0       & BIO & New3 & LP & AMR 2.0       & BIO & New3 & LP  \\ \hline
%SPRING 603       & 0.823 & 0.582 & 0.738 & 0.785 & 0.832 & 0.588 & 0.741 & 0.787  \\
%SPRING 703       & 0.827 & 0.579 & 0.734 & 0.783 & 0.834 & 0.595 & 0.746 & 0.791  \\
%SPRING 803       & 0.828 & 0.586 & 0.734 & 0.784 & 0.833 & 0.589 & 0.735 & 0.791  \\
%SPRING 903       & 0.827 & 0.587 & 0.741 & 0.787 & 0.831 & 0.575 & 0.734 & 0.785  \\
%T5       & 0.819 & 0.565 & 0.728 & 0.781 & 0.824 & 0.586 & 0.734 & 0.795  \\
%Graphene       & \textbf{0.843} & \textbf{0.617} & \textbf{0.750} & \textbf{0.796} & \textbf{0.848} & \textbf{0.622} & \textbf{0.760} & \textbf{0.799}  \\\hline
%\end{tabular}
%\end{table}

We down-sampled the AMR 2.0 training data with sample rates 0.6 and 0.8. Then 4 Spring models with different random seeds and the T5 model were trained on these two sample sets. The Smatch score on AMR 2 test sets and on the out-of-distribution sets (LP, New3, Bio) are reported in Table \ref{tab:robustness}.

Compared to the best individual models, Graphene is more robust and 1.35, 2.86, 0.92, and 0.83 points better when the sample rate is equal to 0.6. While compared to the best individual models, Graphene is more robust and 1.27, 2.73, 1.27 and 0.39 points better better when the sample rate is equal to 0.8. This result demonstrates that the proposed method is robust with respect to smaller training data.

\section{Ties broken arbitrarily}
When many ensemble graphs have the same support, Graphene chooses the ensemble graph created when the first model in the list is chosen as the pivot. Table \ref{tab:amr first} shows the results when each model is put first in the list. If we have a validation set like the case with AMR 2.0 or AMR 3.0, we can tune the right input order to achieve the best performance on the validation set. 

In case there is no validation set available, to mitigate the impact of random input order, we can break the ties arbitrarily, the results of ties broken arbitrarily are reported in Table \ref{tab:ties broken}.
\begin{table}[htbp]
\centering
\caption{\label{tab:amr first}  Results of Graphene when each model is put first in the list.
}
\begin{tabular}{l|lllllllll}
\hline
\textbf{Grapphene Models} & \textbf{S} & \textbf{S703} & \textbf{S803} & \textbf{S903} & \textbf{T5}    & \textbf{APT}  & \textbf{Cai\&Lam}   \\ \hline

 AMR 2.0        & 85.66           & 85.77          & 85.81          & \textbf{85.87}          & 85.65          & 85.67 & 85.11                \\
AMR 3.0        & \textbf{84.44}           & 84.42          & 84.34          & \textbf{84.44}          & 84.35          & 84.18          &  NA          \\
Bio             & 62.38           &  62.41         & 62.39          & \textbf{62.44}         &          62.38 & 62.41          & 62.34                   \\

LP & 78.65           & 78.63          & \textbf{78.75}          & 78.70         & 78.65          & 78.23          & 77.75                  \\
New3     & 75.65  & 75.69  & \textbf{75.88} & 75.82 & 75.78 & 75.48 & 74.92 \\

\hline
\end{tabular}
\end{table}

%\begin{table}[htbp]
%\centering
%\caption{\label{tab:amr first} \textcolor{red}{Old results} Results of Graphene when each model is put first in the list.
%}
%\begin{tabular}{l|lllllllll}
%\hline
%\textbf{Grapphene Models} & \textbf{S} & \textbf{S703} & \textbf{S803} & \textbf{S903} & \textbf{T5}    & \textbf{APT}  & \textbf{Cai\&Lam}   \\ \hline
%SPRING        & 83.69           & 86.92          & 84.18          & 89.60          & 91.38          & 74.89          & 80.75          & 73.07  & 82.27          \\
%SPRING 703        & 83.79           & 87.04          & 84.32          & 89.62          & 91.66          & 74.85          & 79.93 & 73.88           & 82.33          \\
%SPRING 803        & 83.91           & 87.03          & 84.42          & 89.79          & 91.31          & 74.05          & 79.72 & 73.98           & 82.21          \\
% AMR 2.0        & 85.55           & 85.65          & 85.72          & \textbf{85.74}          & 85.54          & 85.54 & 84.97                \\
%AMR 3.0        & \textbf{84.34}           & 84.32          & 84.24          & \textbf{84.34}          & 84.27          & 84.06          &  NA          \\
%Bio             & 62.37           &  62.39         & 62.39          & \textbf{62.46}         &          62.38 & 62.38          & 62.31                   \\
%LP & 80.50           & 80.50          & \textbf{80.57}          & 80.51         & 80.46          & 80.11          & 79.54                  \\

%New3     & 75.61  & 75.69  & \textbf{75.86} & 75.77 & 75.74 & 75.47 & 74.94 \\

%\hline
%\end{tabular}
%\end{table}

\begin{table}[htbp]
\centering
\caption{\label{tab:ties broken} Results of Graphene when ties are broken arbitrarily.
%in the in-distribution evaluation.
}
\begin{tabular}{l|lllllllll}
\hline
\textbf{Grapphene Models} & \textbf{AMR 2.0} & \textbf{AMR 3.0} & \textbf{Bio} & \textbf{LP} & \textbf{New3}  \\ \hline
 Graphene 4SATC        & 85.52           & 84.27          & 62.39          & 78.50          & 75.66 \\             

\hline
\end{tabular}
\end{table}

%\begin{table}[htbp]
%\centering
%\caption{\label{tab:ties broken} \textcolor{red}{Old results} Results of %Graphene when ties are rboken arbitrarily.
%in the in-distribution evaluation.
%}
%\begin{tabular}{l|lllllllll}
%\hline
%\textbf{Grapphene Models} & \textbf{AMR 2.0} & \textbf{AMR 3.0} & \textbf{Bio} & \textbf{LP} & \textbf{New3}  \\ \hline
% Graphene 4SATC        & 85.42           & 84.18          & 62.40          & 80.34          & 75.64 \\             

%\hline
%\end{tabular}
%\end{table}

\section{Support and Smatch}
We have shown in Table \ref{tab:support} that the average total support is highly correlated with the Smatch score. We performed statistical significant tests to support the given hypothesis. Below is the correlation between the "Normalised total support" (the total support normalised to the size of the graph) and the Smatch score, together with the p-value for each dataset:

\begin{itemize}
    \item AMR 2.0: Pearson correlation =0.60 , p-value = 2.7e-117
    \item AMR 3.0: Pearson correlation =0.49 , p-value = 2.6e-137
    \item BIO: Pearson correlation =0.55 , p-value = 0.0
    \item LP: Pearson correlation =0.56, p-value =3.4e-130
    \item New3: Pearson correlation = 0.73, p-value=5.1e-191
\end{itemize}
    
The overall correlation between the "Normalised total support" and the Smatch score, together with the p-value for all datasets is: Pearson correlation =0.67 , p-value=0.0

% keywords can be removed
%\keywords{First keyword \and Second keyword \and More}

% \bibliographystyle{unsrt}
% \bibliography{references}

%\bibliography{references}  %%% Remove comment to use the external .bib file (using bibtex).
%%% and comment out the ``thebibliography'' section.

%%% Comment out this section when you \bibliography{references} is enabled.

\end{document}